\documentclass{article}

\PassOptionsToPackage{numbers,sort,compress}{natbib}
\usepackage[preprint]{preprint}

\usepackage[utf8]{inputenc} % allow utf-8 input
\usepackage[T1]{fontenc}    % use 8-bit T1 fonts
\usepackage[hidelinks]{hyperref}       % hyperlinks
\usepackage{url}            % simple URL typesetting

\usepackage{booktabs}       % professional-quality tables
\usepackage{amsfonts}       % blackboard math symbols
\usepackage{nicefrac}       % compact symbols for 1/2, etc.
\usepackage{microtype}      % microtypography
\usepackage{xcolor}         % colors

\usepackage{graphicx} 

\usepackage{listings}
\usepackage{tcolorbox}
\tcbuselibrary{listings, breakable, skins}
\usepackage{caption}

\DeclareCaptionType{prompt}[Prompt][List of Prompts]

\tcbset{
  promptstyle/.style={
    breakable,
    enforce breakable,
    skin=enhanced,
    colback=gray!5,
    colframe=black!60,
    fonttitle=\bfseries\small,
    left=4pt, right=4pt, top=4pt, bottom=4pt,
    boxrule=0.4pt,
    pad at break=2mm,
  }
}

\usepackage{subcaption}

\newcommand{\runinheading}[1]{%
  \par\noindent\textbf{#1}\quad
}

\usepackage{tikz}
\usetikzlibrary{positioning,arrows.meta}

\usepackage{amsmath,amsthm,amssymb}

\theoremstyle{plain}
\newtheorem{thm}{Theorem}
\newtheorem{prop}[thm]{Proposition}
\newtheorem{lem}[thm]{Lemma}

\theoremstyle{definition}
\newtheorem{defn}[thm]{Definition}

\newcommand{\B}[1]{\mathbb{#1}}
\newcommand{\C}[1]{\mathcal{#1}}
\newcommand{\D}[1]{\operatorname{\rm #1}}

\newcommand{\defeq}{\stackrel{\hbox{\tiny \rm def}}{=}}
\newcommand{\indep}{\mathrel{\perp\llap{\(\,\,\perp\)}}}

\title{Mitigating Label Bias with\\Interpretable Rubric Embeddings}

\author{%
  Calvin Isley \\
  Harvard Kennedy School\\
  Harvard University\\
  Cambridge, MA 02139 \\
  \texttt{cisley@g.harvard.edu} \\
  \And 
  Johann D. Gaebler \\
  Department of Statistics\\
  Harvard University\\
  Cambridge, MA 02139 \\
  \texttt{jgaebler@fas.harvard.edu} \\
  \And 
  Sharad Goel \\
  Harvard Kennedy School\\
  Harvard University\\
  Cambridge, MA 02139 \\
  \texttt{sgoel@hks.harvard.edu} \\
}

\begin{document}

\maketitle

\begin{abstract}
  Statistical decision algorithms are increasingly deployed in domains where ground-truth labels are hard to obtain, such as hiring, university admissions, and content moderation. In these settings, models are typically trained on historical human evaluations---for example, using past hiring decisions as a proxy for true applicant quality. However, if past evaluations unjustly favor certain groups, models trained on these labels may inherit those biases. To address this problem, we propose basing predictions on rubric embeddings, a representation framework that replaces standard black-box embeddings with features derived from expert-defined criteria that align with the underlying construct of interest. By anchoring predictions to semantically meaningful dimensions, this approach guards against biased proxy signals. We provide both theoretical and empirical evidence that rubric embeddings mitigate label bias under plausible conditions. Empirically, we evaluate our method on a novel dataset of applications to a large master’s program. We find that models trained on rubric embeddings reduce group disparities while improving measures of cohort quality. Our results suggest that basing predictions on interpretable, domain-grounded representations offers a practical approach to learning in the presence of biased labels.
\end{abstract}

\section{Introduction}
\label{sec:intro}

In many supervised learning problems, models are trained to predict labels that are treated as ground truth for the quantity of interest. This includes, for example, predicting object categories from images, transcripts from speech, and disease status from medical data. In many domains, however, ground-truth labels do not exist or are prohibitively difficult to obtain. Often the target quantity is a latent construct that is only imperfectly defined and can only be observed for a non-representative subset of individuals. For instance, in hiring, one may wish to select the candidate most likely to succeed in a role, but ``success'' is inherently ambiguous and is typically only observed for candidates who were hired in the past.

In the absence of ground-truth labels, a common empirical strategy is to predict historical human evaluations, such as hiring decisions or evaluator ratings of past applicants. But these proxy labels may be biased, reflecting implicit and unjustified preferences for certain groups.  In this paper, we show how models trained on proxy labels can inherit and amplify bias. Using a novel dataset of applications to a master's program, we show that the common approach of training models on black-box text embeddings can reproduce biases present in historical evaluations. In particular, we find that such embeddings encode sensitive attributes like gender, even when explicit markers are removed, and that models trained on these representations are, under certain conditions, guaranteed to replicate biases in the data. We provide both empirical and theoretical evidence for this phenomenon, drawing on the framework of correspondence experiments from the social sciences.

We then analyze three past approaches to mitigate this problem---orthogonalization, redaction, and marginalization---and discuss their limitations. In brief, orthogonalization induces demographic parity, which can inadvertently penalize qualified applicants from certain groups. Redaction, while sound in theory, can fail in practice, as residual signals of gender often remain in the embeddings. Finally, marginalization ensures that certain types of biases are removed, but can introduce other forms.

Given these limitations, we instead propose basing predictions on rubric embeddings, representations constructed from expert-defined criteria aligned with the outcome of interest.
In the case of university admissions, we construct hundreds of rubric items that encode, for example, past coursework and grades, work experience, and letter evaluations. We then use LLMs to score applications along the rubric dimensions to create a semantically grounded representation of the data. We show that basing predictions on rubric embeddings addresses the central limitations of past methods, yielding cohorts of students that are both highly qualified and demographically diverse. Our results suggest that rubric embeddings provide a practical strategy for mitigating label bias in a wide range of decision-making contexts, addressing a long-standing challenge in the equitable design of algorithms~\cite{chohlas2023designing}. 
We conclude by connecting our work to two common understandings of discrimination in the law---disparate treatment and disparate impact---and discussing some of the limitations of our approach.

\runinheading{Related Work.}
Our work connects three strands of research that have developed largely in parallel: (1) algorithmic fairness and, specifically, the problem of label bias; (2) emerging work on concept bottleneck models and rubric representations, which considers how best to construct and leverage interpretable, semantically meaningful features; and (3) the design of correspondence experiments, also known as audit studies, popular for measuring discrimination in the social sciences. 
We discuss the first two areas below, and then consider correspondence experiments in detail in Section~\ref{sec:problem}.

An extensive literature in algorithmic fairness develops techniques for identifying and mitigating biases in algorithmic systems \citep[see, e.g.,][]{hardt2016equality, dwork2012fairness, kleinberg2016inherent, chouldechova2018case, kusner2017counterfactual,corbett-davies_measure_2023,chohlas2023designing,corbett2017algorithmic}. We focus on \emph{label bias}, which arises when models are trained on proxies that systematically diverge from the underlying outcomes of interest. For example, an algorithm that predicts future healthcare expenditures as a proxy for future patient needs can induce racial disparities, since White patients, on average, spend more than equally sick Black patients~\citep{obermeyer2019dissecting}. Ideally, one would acquire more accurate labels, but doing so is often expensive or otherwise infeasible.
To avoid collecting new data, past work has considered reweighting the existing, biased data under structural assumptions about the labeling process~\citep{jiang_identifying_2020}, or adjusting the loss to account for group-dependent label noise~\citep{wang_fair_2021}. These approaches aim to correct bias at the level of the training objective, but rely on strong assumptions about how bias enters the labeling process, which may not hold in practice.
\citet{label-bias} show that, in the presence of label bias, selectively removing features can improve performance on the true, unobserved label. Theirs is an important theoretical insight, but they do not offer a constructive method for selecting the features to remove.
Building on that work, we show that using rubric embeddings provides a practical, constructive method for selecting which features to retain, effectively operationalizing this idea of feature removal; we analytically relate our approach to their observation.

Concept bottleneck models (CBMs) structure predictions by factoring them through a set of interpretable, human-defined features~\citep{koh_concept_2020}. Originally developed for image data, CBMs have been extended to text, where LLMs are used to score documents on predefined or generated concept sets, often achieving performance competitive with black-box approaches~\citep{tan_interpreting_2025, sun_concept_2025,
ludan_interpretable-by-design_2024}. 
Rubric embeddings (also known as rubric-derived representations) are a closely related approach that uses LLMs to extract structured features from text, and have likewise shown competitive or superior performance to
black-box embeddings in low-data regimes~\citep{lin2024spur, balek2024llm,
demirel2026rubric}. Although CBMs and rubric-derived representations have
developed largely in parallel, both share the central idea of substituting a
small, semantically meaningful feature set for a generic embedding. Most directly related to our own work,
\citet{ludan_interpretable-by-design_2024} and \citet{demirel2026rubric} both
use LLMs to generate a collection of interpretable, semantically
relevant features.
This literature has focused
primarily on interpretability, sample efficiency, and robustness to distribution
shift~\citep{yan_robust_2023, choi2024adaptiveconceptbottleneckfoundation}.
However, to our knowledge, the
potential of concept-based representations to mitigate label bias has not been studied.

\section{Label Bias and Black-Box Embeddings: A Case Study in Admissions}

\label{sec:problem}

We begin by illustrating the problem of label bias and discussing its connection to black-box embeddings. 
To do so, we imagine designing an algorithm to evaluate applicants to a graduate degree program. 
We base our empirical analysis on datasets derived from real applications to a large master's program in public policy, where we synthetically inject increasing degrees of bias into the real, observed admissions decisions. This approach allows us to study label bias across a range of settings.
In the process, we build on and formalize the notion of correspondence studies, a widely used method for measuring discrimination in the social sciences. 

We specifically start with 1,112 applications submitted during the 2025--2026 application cycle. Each application contains structured information, such as basic demographics and standardized test scores, along with four kinds of unstructured materials: a resume of professional and volunteer experience; transcripts from all previous undergraduate and graduate institutions; three letters of recommendation; and short essays describing applicants' backgrounds, reasons for applying, and future plans. Admissions officers evaluate each application through a structured process resulting in a 4--20 point score. These scores determine admissions decisions and are distributed approximately normally, with most applicants receiving between 10 and 17 points (Fig.~\ref{fig:score_dist}).

For the purposes of our stylized empirical analysis, we treat these numeric scores as ground-truth labels. 
While these observed scores may themselves contain noise or bias, they serve as a fixed reference point for evaluating the relative performance of different modeling approaches.
We then explicitly introduce label bias by adding a group-specific noise term to the scores.
Specifically, given the actual score \(Y_i\) for the \(i\)-th applicant, we define the proxy score
\begin{align}
\label{eq:biased_outcome}
    Y_i' \defeq \begin{cases}
        Y_i + Z_i & G_i = m, \\
        Y_i - Z_i & G_i = f,
    \end{cases}
    \qquad Z_i \sim \C N(b, 1), \qquad Z_i \indep X_i, G_i,
\end{align}
where \(G_i \in \{f, m\}\) denotes whether the applicant is male or female and \(b\) gives the size of the advantage for male applicants. We vary \(b\) from 0, representing no advantage, to 2.5, representing a large advantage for male applicants. 

Throughout, we fit models on the proxy label \(Y'\), and then consider both the ``true'' quality (as indicated by \(Y\)) and demographic composition of the cohort of top-ranked applicants under the learned model.
To fit these models, we must construct a representation of the unstructured data; one common approach is to embed the data using general-purpose black-box text embeddings~\citep{reimers2019sentence, devlin2019bert, neelakantan2022text}.

\subsection{Theoretical analysis}
\label{sec:theory}

We quantify the bias of the fitted models by
drawing on the extensive literature on correspondence experiments~\citep{gaebler2025auditing}. Correspondence experiments, also known as ``audit studies,'' measure the extent to which decisions in hiring, admissions, and related settings vary for individuals who are (nearly) identical except for some protected characteristic~\citep{bertrand2017field, gaddis2019understanding}. In practice, researchers experimentally manipulate materials to alter decision-makers' perceptions of group membership.
In one of the first such studies, \citet{bertrand2004emily} sent out job applications that were identical in every respect but the name of the applicant. Those with names suggesting they were Black (e.g., `Lakisha' or `Jamal') received significantly fewer callbacks than those with names suggesting they were White (e.g., `Emily' or `Greg')---providing evidence of racial bias.

To formalize this approach to measuring bias, let \(T(x, g) : \C X \times \{m, f\} \to \C X\) denote a correspondence experiment \emph{manipulation}: a (possibly stochastic) transformation of an applicant's features \(X \in \C X\) with \(T(x, m)\) and \(T(x, f)\) representing the ``male-presenting'' and ``female-presenting'' versions of \(x \in \C X\), respectively.\footnote{%
    For notational conventions, full details on regularity conditions, and proofs, see Appendix~\ref{app:math}.
}
For example, one such transformation might change the (raw, unstructured) application materials so that the applicant's listed name and the pronouns their letter writers use to refer to them accord with the target gender. 
Now, given an algorithm \(h(x) : \C X \to \B R\), we define the bias of \(h\) under the correspondence experiment determined by \(T\) as follows.

\begin{defn}[\(\D {bias}_T(h)\)]
\label{defn:bias}
    We define the \emph{bias} with respect to \(T(x, g)\) as
    \begin{equation}
    \label{eq:cor_exp_bias}
        \D {bias}_T(h) \defeq \B E[h(T(X, m)) - h(T(X, f))].
    \end{equation}
\end{defn}

Def.~\ref{defn:bias} captures the average extent to which predictions for an applicant differ between their male- and female-presenting features. With this framework, we can now explicitly derive the bias of models trained with a broad range of biased labels, including those defined in Eq.~\eqref{eq:biased_outcome}.
(Proofs of Prop.~\ref{prop:bias} and other results are given in Appendix~\ref{app:math}.)

\begin{prop}
\label{prop:bias}
  Let \(Y, B \in \B R\) be random variables, and let \(G \in \{m, f\}\), \(X \in \C X\), and \(T(x, g) : \C X \times \{m, f\} \to \C X\) be as above. Let \(r(x) : \C X \to \C R\) and define \(R \defeq r(X)\). Suppose \(B \indep X \mid G\). Then
  \begin{equation}
  \label{eq:bias_decomp}
    \D {bias}_T(\nu) = \D {bias}_T(\mu) + \left(\B E[B \mid G = m] - \B E[B \mid G = f]\right) \cdot \B E[\pi(T(X, m)) - \pi(T(X, f))]
  \end{equation}
  where
  \(\nu(x) \defeq \B E[Y + B \mid R = r(x)]\), \(\mu(x) \defeq \B E[Y \mid R = r(x)]\), and \(\pi(x) \defeq \Pr(G = m \mid R = r(x))\).
\end{prop}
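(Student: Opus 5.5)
By linearity of conditional expectation, \(\nu(x) = \mu(x) + \B E[B \mid R = r(x)]\). The plan is to compute the second term explicitly and then apply the correspondence-experiment bias functional of Def.~\ref{defn:bias}, which is linear in \(h\).

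\emph{Step 1: reduce \(\B E[B \mid R]\) to a function of \(\pi\).} We condition on \(G\) inside \(\B E[B \mid R]\). By the tower property,
\[
\B E[B \mid R] = \sum_{g \in \{m, f\}} \Pr(G = g \mid R)\, \B E[B \mid R, G = g].
\]
Since \(R = r(X)\) is a measurable function of \(X\) and \(B \indep X \mid G\), we also have \(B \indep R \mid G\). Hence \(\B E[B \mid R, G = g] = \B E[B \mid G = g] \defeq \beta_g\), a constant. Writing \(\Pr(G = f \mid R) = 1 - \pi\) then gives
\[
\B E[B \mid R = r(x)] = \beta_f + (\beta_m - \beta_f)\,\pi(x).
\]
This identity is the main technical point. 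It must be stated for regular conditional distributions given \(R\), so that \(\pi\) and \(\mu\) are well-defined functions of \(r(x)\) that can be evaluated at \(r(T(X, g))\), even when \(T(X, g)\) is not distributed like \(X\). I would invoke the appendix's regularity conventions to choose fixed versions of these conditional expectations and use those versions everywhere. The conditional-independence step itself is routine: for bounded measurable \(\phi\), \(\B E[B\,\phi(R)\,\mathbf{1}\{G = g\}] = \B E\bigl[\B E[B \mid X, G]\,\phi(r(X))\,\mathbf{1}\{G = g\}\bigr] = \beta_g\,\B E[\phi(R)\,\mathbf{1}\{G = g\}]\).

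\emph{Step 2: plug in.} Since \(\nu = \mu + \beta_f + (\beta_m - \beta_f)\pi\) holds pointwise as functions on \(\C X\), we get
\[
\nu(T(X, m)) - \nu(T(X, f)) = \mu(T(X, m)) - \mu(T(X, f)) + (\beta_m - \beta_f)\bigl[\pi(T(X, m)) - \pi(T(X, f))\bigr].
\]
The constant \(\beta_f\) cancels in the difference. Taking expectations and using linearity (assuming integrability of \(Y\) and \(B\) so that all terms are finite) yields Eq.~\eqref{eq:bias_decomp}.
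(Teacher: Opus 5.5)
Your argument is essentially the paper's proof: split \(\nu = \mu + \B E[B \mid R]\), expand \(\B E[B \mid R]\) over \(G\) by the tower property, use \(B \indep r(X) \mid G\) to replace the inner conditional means by \(\beta_g\), obtain \(\nu = \mu + \beta_f + (\beta_m - \beta_f)\pi\), and substitute into the bias so that \(\beta_f\) cancels. One small precision: that identity holds almost everywhere under the law of \(X\) rather than pointwise, and what lets you evaluate it at \(T(X, g)\) is the appendix's assumption that the law of \(T(X, g)\) is absolutely continuous with respect to that of \(X\); regular conditional distributions are not what is needed.
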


Prop.~\ref{prop:bias} considers the bias of an algorithm \(\nu(x) = \B E[Y + B \mid R = r(x)]\) that perfectly estimates a biased label \(Y + B\) given the available information in some representation \(R\) of an application, such as a black-box text embedding.
In it, we assume that \(B\) perturbs the ground truth label \(Y\) in a way that only depends on group membership \(G\), as holds for our synthetic datasets described in Eq.~\eqref{eq:biased_outcome}.
In this case, the proposition decomposes \(\D {bias}_T(\nu)\) into three terms: (1) \(\D {bias}_T(\mu)\), the baseline bias in the algorithm \(\mu\) present when predicting the true label \(Y\); (2) \(\B E[B \mid G = m] - \B E[B \mid G = f]\), the difference in the average perturbation \(B\) across groups, a measure of how much a group is advantaged; 
and (3) \(\B E[\pi(T(X, m)) - \pi(T(X, f))]\), the difference in the probability that male- and female-presenting features belong to a male applicant. By Eq.~\eqref{eq:biased_outcome},
\[
    \B E[B \mid G = m] = \B E[Z \mid G = m] = b, \qquad \B E[B \mid G = f] = \B E[-Z \mid G = f] = -b.
\]
Consequently, for our stylized example, 
\begin{equation}
\label{eq:bias_synth_decomp}
    \D {bias}_T(\nu) = \D {bias}_T(\mu) + 2b \cdot \B E[\pi(T(X, m)) - \pi(T(X, f))].
\end{equation}
Critically, \(\D {bias}_T(\nu)\) depends on the extent to which applications can be manipulated to look more male or more female. In other words, to the extent that \(R\) implicitly encodes group membership, it is more likely to inherit any biases present in the labels. 
Our example assumes a particular form of label bias to facilitate analysis, but the core qualitative insight extends beyond this specific formulation, a pattern we explore empirically in the subsequent sections.

Eq.~\eqref{eq:bias_decomp} further indicates that \(\D {bias}_T(\nu)\) depends on the baseline bias, \(\D {bias}_T(\mu) \), of the algorithm when predicting the unbiased, ground-truth label \(Y\). It is perhaps surprising that an algorithm can exhibit such bias even when trained on an unbiased label. Prop.~\ref{prop:baseline_bias} below sheds light on this phenomenon by deriving an alternative expression for the bias of an algorithm.
For clarity and tractability, we state the result for linear predictors but the intuition holds more broadly.

\begin{prop}
\label{prop:baseline_bias}
    Let \(X \in \C X\) and \(R = r(X) \in \B R^p\) be as above, and let \(Y \in \B R\) be 
    square-integrable. Let \(h_R(r) = \alpha_0 + r^\top \alpha\) be the population 
    OLS predictor of \(Y\) from \(R\). Define the audit-induced representation shift
    \[
        s \defeq \B E\!\left[r(T(X,m)) - r(T(X,f))\right].
    \]
    Then
    \begin{equation}
    \label{eq:baseline_bias}
        \D{bias}_T(h_R) = s^\top \D{Var}(R)^{-1} \D{Cov}(R, Y).
    \end{equation}
\end{prop}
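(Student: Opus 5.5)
The plan is to compute \(\D{bias}_T(h_R)\) directly from the explicit form of the population OLS predictor, exploiting linearity. Here \(h_R\) is viewed as a function on \(\C X\) via \(h_R(x) = \alpha_0 + r(x)^\top \alpha\), so that \(h_R(T(X,g)) = \alpha_0 + r(T(X,g))^\top\alpha\). The argument has three steps.

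First, I would recall the standard characterization of the population OLS coefficients. Square-integrability of \(Y\) is given, and I will assume \(R\) is also square-integrable with \(\D{Var}(R)\) invertible; these are the implicit regularity conditions the appendix presumably states. Under them, the minimizer of \(\B E[(Y - \alpha_0 - R^\top\alpha)^2]\) is found from the first-order conditions:
\begin{equation*}
\alpha_0 = \B E[Y] - \B E[R]^\top\alpha, \qquad \B E[R(Y - \alpha_0 - R^\top\alpha)] = 0.
\end{equation*}
Substituting the first condition into the second centers the variables and gives \(\D{Cov}(R,Y) = \D{Var}(R)\alpha\), so \(\alpha = \D{Var}(R)^{-1}\D{Cov}(R,Y)\).

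Second, I would plug this into Def.~\ref{defn:bias}. The intercept cancels in the difference:
\begin{equation*}
\D{bias}_T(h_R) = \B E\bigl[(r(T(X,m)) - r(T(X,f)))^\top \alpha\bigr].
\end{equation*}
Since \(\alpha\) is a deterministic vector, linearity of expectation pulls it out, leaving \(s^\top\alpha\). Substituting the expression for \(\alpha\) from the first step yields Eq.~\eqref{eq:baseline_bias}.

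The main point requiring care is that the two expectations are taken under different distributions. The expectation defining \(\alpha\) is under the observational law of \(R\). The expectation defining \(s\) is under the joint law of \(X\) and the possibly stochastic manipulation \(T\). No relation between these laws is needed, because \(\alpha\) is a fixed population constant once it has been computed. I only need \(r(T(X,g))\) to be integrable so that \(s\) is well defined, and I would state this as an assumption. If \(\D{Var}(R)\) is singular, the formula would still hold with the Moore--Penrose pseudoinverse, provided \(s\) lies in the column space of \(\D{Var}(R)\). I would only remark on this extension rather than prove it.
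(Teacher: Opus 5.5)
Your proposal is correct and follows essentially the same route as the paper: the intercept cancels, linearity of expectation pulls the deterministic \(\alpha\) out to give \(s^\top\alpha\), and one then substitutes \(\alpha = \D{Var}(R)^{-1}\D{Cov}(R,Y)\). The only difference is that you derive this coefficient formula from the normal equations and state the integrability and invertibility assumptions explicitly, whereas the paper takes them as given.
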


Prop.~\ref{prop:baseline_bias} illustrates that when features shift under the gender manipulation \(T\), bias arises to the extent that those features are predictive of \(Y\). Writing the population coefficient vector as \(\alpha = \D{Var}(R)^{-1} \D{Cov}(R, Y)\), Eq.~\eqref{eq:baseline_bias} becomes \(\D {bias}_T(h_R) = s^\top \alpha\), meaning that bias is nonzero precisely when the average representation shift \(s\) has a component along the predictive direction \(\alpha\). As a result, omitted variable bias can lead to audit study bias, even in the presence of unbiased labels. As a simple illustration, note that when \(R\) is whitened (i.e., when \(\D {Var}(R) = I_p\)), Eq.~\eqref{eq:baseline_bias} further simplifies to \(s^\top \D{Cov} (R, Y)\), meaning that each feature individually contributes to the bias to the extent that it correlates with the outcome and changes under the transformation. If some feature relevant to \(Y\) is correlated with gender (for instance, if female applicants are more likely to engage in public service) but is not captured by \(R\), then the components of \(R\) that \emph{are} captured and that shift with gender act as proxies for the omitted feature, inducing bias. 

The distinction between omitted variable bias and label bias is subtle, and they often lead to indistinguishable issues in practice. Consider the widely reported case of an internal hiring algorithm at Amazon that penalized resumes containing terms like ``women's'' (e.g., as in ``women's chess club captain'')~\citep{dastin_insight_2018}. Conventional wisdom suggests that the penalty reflected biases in the historical hiring data. However, Proposition~\ref{prop:baseline_bias} suggests omitted variable bias as an alternative culprit. A model trained on representations that omit job-relevant features correlated with gender would likely exhibit similar biases. We return to this point below, showing that insufficiently rich embeddings can cause bias in practice.
\vspace{-0.25cm}
\subsection{Empirical analysis}
\label{sec:empirical}
We next empirically examine how much gendered information is implicitly encoded in standard text embeddings, and analyze its consequences for cohort selection. 
To convert the raw application materials to vector representations, we first extract their contents with Microsoft Document Intelligence~\citep{microsoft2024documentintelligence}. We then separately embed each of the four document types (resumes, essays, letters, and transcripts) using OpenAI's \texttt{text-embedding-3-large} model~\citep{openai2023textembedding}.
Given the limited number of applications,
we use only the first 250 dimensions of the embeddings for each category~\citep{kusupati2022matryoshka}. 
We additionally include some structured data, but we omit gender and other explicit demographic features because deployed admissions models likely cannot legally include them~\citep{2023students}.
This process results in an approximately 1,000-dimensional embedding.

We next use these vector representations to predict the biased labels \(Y'\), for \(b \in [0, 2.5]\).
We specifically fit ridge regression models, which we denote by \(h\), using the \texttt{glmnet} package~\citep{glmnet_package}.
To generate out-of-sample predictions, we use \(10 \times 10\)-fold nested cross-validation: For each of the 10 outer folds, we fit and tune a ridge model on the remaining nine folds using 10-fold cross-validation before predicting on the held-out fold. We then calibrate the resulting models using linear scaling on the training folds. We repeat this process 20 times, combining standard errors across iterations using Rubin's rules~\citep{rubin1987multiple}. We fit models on a 4-core CPU, running for roughly 12 hours.

Following~\citet{gaebler2025auditing}, we define \(T(x, g)\) to account for many signals of gender beyond names that rich inputs like our application materials contain.
We use language models to manipulate gendered cues about applicants, including: (1) third-person pronouns; (2) gendered titles (e.g., ``Mr.'' or ``Mrs.''); (3) kinship terms (e.g., ``son'' or ``daughter''); and (4) gender-marked occupations (e.g., ``waitress'') and gender-exclusive organizations (e.g., fraternities or sororities). Because applicants' home countries often feature prominently in their application materials, we replace their name with that of a randomly chosen historical applicant from the same country with the appropriate gender. Further details, including our validation process and prompt templates, appear in Appendix~\ref{app:audit}.

By Eq.~\eqref{eq:bias_synth_decomp}, \(\D {bias}_T(h)\) equals the baseline bias in \(h\) plus twice the product of \(b\) and 
\(\B E[\pi(T(X, m)) - \pi(T(X, f))]\), where the latter term quantifies the extent to which our representations encode gender.
In our data, we estimate that this difference is \(91.2\% \pm 0.6\%\), meaning that altering applications to indicate gender strongly impacts the embeddings, which in turn impacts the model's predicted scores. 
Relatedly, ridge regression models predicting self-reported gender from the text embeddings achieve an out-of-sample AUC of \(99.6\% \pm 0.1\%\), again demonstrating that the embeddings encode gender with near-perfect accuracy. While this empirical result is not surprising given how well standard black-box text embeddings encode information, it does suggest that the models will strongly inherit the bias injected into the labels.

Further, the baseline \(\D {bias}_T(\mu)\) equals \(-0.67\), meaning that the algorithm is biased \emph{against} men when predicting the ground-truth label. Proposition~\ref{prop:baseline_bias} provides one potential explanation: black-box embeddings fail to capture certain features relevant to admissions scores that are also correlated with gender. Gender subsequently serves as a proxy for that missing information, inducing the bias we observe with unbiased labels. We note that this quantity represents the bias of the \emph{algorithm} and not the bias of the admissions \emph{decisions}. Consistent with this interpretation, predictive models trained using a broader collection of features, including information about past student performance not well-captured by black-box embeddings, exhibit virtually no bias, suggesting that the actual admissions decisions are not in fact biased; see Fig.~\ref{fig:audit_all}.

The left panel of Fig.~\ref{fig:audit_study} shows that our theoretical estimates of \(\D {bias}_T(h)\) (dashed line) almost perfectly match the empirical estimates from the data.
The middle and right panels illustrate the consequences for admissions.
We consider top-\(k\) admissions policies that rank applicants by predicted score and admit the top 20\%, a hypothetical admission rate in line with the admissions policies of competitive graduate programs.
As the advantage given to male applicants grows, the proportion of women admitted falls precipitously (middle panel), dropping to effectively zero when \(b = 2.5\). The quality of the admitted class, measured by applicants' ``ground-truth'' scores \(Y\), declines in parallel (right panel), as biased predictions substitute less qualified men for more qualified women.

\begin{figure}[t]
  \begin{center}
    \includegraphics{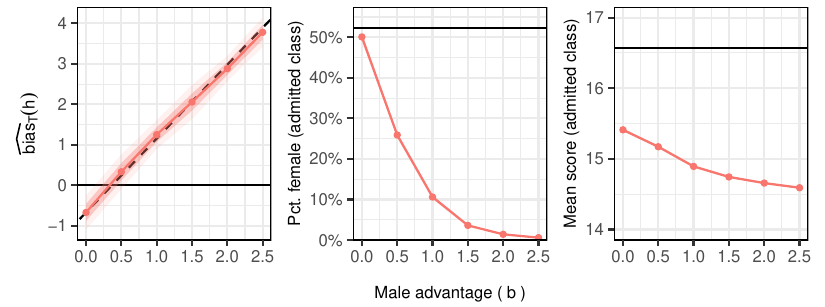}
  \end{center}
  \vspace{-0.25cm}
  \caption{\emph{%
    Bias of regression models trained on proxy labels \(Y'\) and consequences for admitted classes. The \(x\)-axis in all panels denotes male advantage \(b\) in Eq.~\eqref{eq:biased_outcome}. Dark and light shaded regions show pointwise 68\% and 95\% confidence intervals (not visible in center and right-hand panels). Solid black lines show zero bias (left) or corresponding values for the top 20\% of students ranked by actual score \(Y\) (center, right). \emph{Left:} Estimated bias \(\widehat{\D {bias}}_T(h)\) of fitted models \(h(x)\). The dashed line shows bias estimated according to Eq.~\eqref{eq:bias_decomp}. \emph{Middle:} Percentage of women in the admitted class under a top-20\% policy. \emph{Right:} Mean true score \(Y\) of the admitted class.%
  }}
  \label{fig:audit_study}
\end{figure}

\section{Mitigating Label Bias}
Our extended example above illustrates how label bias can propagate to predictions in models based on black-box text embeddings. Our theoretical analysis suggests this phenomenon stems in part from the ability of black-box embeddings to capture a rich set of signals in the data, including those related to gender. While typically an asset, this property can also result in the models inheriting bias in the labels.
As such, it seems reasonable to select a representation of the data that does not so strongly encode these gender signals. But there are better and worse ways to do so. 
We first analyze three common approaches in the algorithmic fairness literature---orthogonalization, redaction, and marginalization---and discuss their limitations. We then introduce rubric embeddings, showing that they effectively eliminate label bias in our setting while avoiding the shortcomings of other methods.

\subsection{Orthogonalization, redaction, and marginalization}
\label{sec:ortho}

Orthogonalization~\citep{bolukbasi2016man, ravfogel2020null} involves selecting a representation 
\(R = r(X)\) such that \(R \indep G\), meaning the representation is (approximately) independent of group membership.
Orthogonalization and the closely related technique of adversarial debiasing~\citep{edwards2015censoring, zhang2018mitigating} are common recommendations in the literature for achieving fair downstream predictions when training on rich features that can closely approximate group membership. Both techniques proceed by removing information about a protected characteristic like gender from individuals' feature representations while otherwise leaving the remaining information intact, for instance by projecting latent representations of applicants onto subspaces orthogonal to their gender. The end goal of orthogonalization and related techniques is to eliminate gender from an individual's features in an information-theoretic sense.

This approach, however, comes with substantial drawbacks. In particular, independence from the protected characteristic \(G\) carries through to downstream decisions based on the latent representation \(R\). For instance, if \(d(r) : \C R \to \{0, 1\}\) represents an admissions policy, the proportion of admitted students from different groups will necessarily equal the population proportion, i.e., 
\[
\Pr(G = g \mid d(R) = 1) = \Pr(G = g).
\]
(See Appendix~\ref{app:math} for proof.)
But if one group of applicants is more qualified than another---for instance, if female applicants are, on average, more qualified than male applicants---enforcing such demographic parity can inadvertently penalize that more qualified group. 
While in practice orthogonalization may only partially remove group information, even approximate independence can induce substantial shifts toward demographic parity.
Indeed, while the aim of orthogonalization is to improve equity, it might itself constitute illegal gender (or racial) ``balancing''~\citep{2023students}; though cf.~\citep{2023coalitionTJ, 2023bostonParent}.

Another natural approach to reducing bias is to redact signals of gender from an applicant's materials \emph{before} embedding them. Formally, redaction represents a map \(\rho : \C X \to \C X\) with the property that it masks the transformation \(T\), i.e.,
\begin{equation}
\label{eq:redaction}
    \rho(T(X, g)) = \rho(X), \qquad g \in \{m, f\}.
\end{equation}
In our applied admissions example, for instance, we operationalize \(\rho(x)\) by creating alternate gender-neutral representations of applicants' materials in which we obscure signals of gender instead of swapping their gender presentation (e.g., replacing ``he'' with ``they'' instead of ``her''); see Appendix~\ref{app:audit} for full details. 

Redaction is guaranteed to be unbiased with respect to \(T\) (see Prop.~\ref{prop:no_bias}).
But unlike orthogonalization, the elimination of gender information through redaction is narrowly tied to a particular manipulation, potentially limiting its effectiveness. To evaluate this possibility, we train models predicting applicants' genders using black-box embeddings of the redacted materials. Despite our redaction efforts, we find that these models still predict applicants' genders with high accuracy, achieving an AUC of \(81.2\% \pm 1.4\%\). 
Even in the absence of strong gender signals---like names and pronouns---the predictive models we train can nevertheless ascertain individuals' gender based on subtle statistical regularities encoded by the black-box embeddings, a finding consistent with past work~\citep[e.g.,][]{wen2025unsupervised, gaebler2025auditing, parasurama2022gendered}. As a result, predictive models trained on black-box embeddings of facially gender-neutral materials can nevertheless reproduce label bias in their predictions, a point we return to below.

Finally, marginalizing out gender or other protected characteristics is a well-explored technique in economics~\citep[e.g.,][]{yang_equal_2020, pope_implementing_2011}, as well as the causal fairness literature~\citep[e.g.,][]{wang2019equal}.
Marginalized predictions replace an individual's actual prediction with a weighted sum of the predictions made when their actual features are replaced with features belonging to different groups. Formally, marginalization replaces an algorithm \(h(x) : \C X \to \B R\) with the weighted sum
\begin{equation}
\label{eq:marginal}
    h_T(x) \defeq w \cdot h(T(x, m)) + [1 - w] \cdot h(T(x, f)),
\end{equation}
for some weight \(w \in [0, 1]\). 

Like redaction, this approach results in predictive algorithms with no bias with respect to \(T\) (see Prop.~\ref{prop:no_bias}).
However, despite its intuitive appeal, marginalization can lead to poor outcomes in practice when implemented using black-box embeddings, which we empirically illustrate below.

\subsection{Rubric Embeddings}
\label{sec:rubric_embeds}
Orthogonalization, redaction, and marginalization share a common methodological goal: surgically eliminating information tainted by gender or other protected characteristics while preserving as much as possible of the rich information present in black-box embeddings. \emph{Rubric embeddings} represent the inverse approach. Rather than removing potentially problematic information, rubric embeddings consist of only information deemed substantively important for the prediction task. Conceptually, rubric embeddings restrict the hypothesis class to functions of semantically meaningful features, limiting the ability of models to exploit spurious correlations.
In domains like admissions, hiring, and medicine, expert decision makers can often articulate what they are looking for---even if they cannot specify a precise decision rule based on those dimensions. Enumerating these dimensions using a rubric against which unstructured materials can be scored (e.g., by grading them with a language model) allows the construction of a feature representation anchored to the aspects of the materials of greatest substantive relevance.

\begin{figure}[t]
  \begin{center}
    \includegraphics{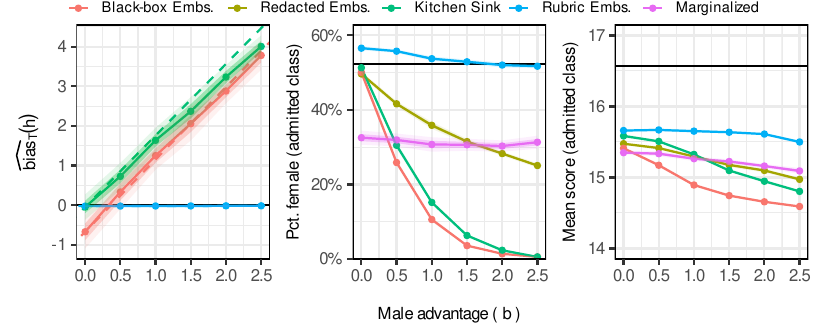}
  \end{center}
  \caption{\emph{%
    Bias of models trained on proxy labels \(Y'\) under different bias mitigation techniques and consequences for admitted classes. The \(x\)-axis in all panels denotes male advantage \(b\) in Eq.~\eqref{eq:biased_outcome}. Colors indicate the bias mitigation technique. Dark and light shaded regions show pointwise 68\% and 95\% confidence intervals. Solid black lines show zero bias (left) or corresponding values for the top 20\% of students ranked by actual score \(Y\) (center, right). \emph{Left:} Estimated bias \(\widehat{\D{bias}}_T(h)\) of fitted models \(h(x)\). Dashed line shows bias estimated according to Eq.~\eqref{eq:bias_decomp}. \emph{Middle:} Percentage of women in the admitted class under a top-20\% policy. \emph{Right:} Mean true score \(Y\) of the admitted class.%
  }}
\label{fig:audit_all}
\end{figure}

To demonstrate this approach, we develop a collection of 396 interpretable measures admissions officers deem substantively important in application review. In collaboration with the admissions staff at our partner master's program, we identify 14 measures of essay quality rated on five-point scales, ranging from grammatical correctness and clarity to prompt adherence and substantive appropriateness; 8 measures summarizing applicants' work histories, including tenure in different sectors and measures of seniority and career trajectory; 39 measures of letters of recommendation, including endorsement strength both overall and along a variety of personal dimensions, references to specific program-relevant technical skills, and substantive appropriateness of the choice of recommender; and 335 distinct measures of previous academic performance, including majors and minors, graduation honors, rank and other background information about previous institutions, and GPA both overall and in specific courses (e.g., microeconomics, calculus II), and areas (e.g., advanced undergraduate mathematics and statistics). 
We omit applicant demographics from our rubric embedding.
Building on the document extraction pipeline used to extract black-box embeddings, we use OpenAI's GPT-5 series of models~\citep{openai2025gpt5} to score the full applicant pool along each of the 396 dimensions.

The rubric embeddings are based largely on objectively measurable factors and do not include explicit indicators of gender. As such, for our usual transformation \(T\), we expect that \(r(T(X, g)) = r(X)\), implying that models based on rubric embeddings should be unbiased with respect to \(T\). Below
we empirically check this and examine the demographic composition and quality of the applicants whom models using rubric embeddings rank highest.
We further compare this rubric embedding approach to several of the approaches discussed above:
(1) predictive models trained on black-box embeddings in Section~\ref{sec:problem} (``black-box embeddings''), (2) predictive models trained using both black-box and rubric embeddings (``kitchen sink''), (3) predictive models trained on black-box embeddings of redacted application materials (``redacted embeddings''), and (4) marginalized black-box embedding models (``marginalized'').
We omit comparisons with orthogonalized models, given their theoretical limitations.

The results are shown in Fig.~\ref{fig:audit_all}. Across levels of male advantage, we find virtually zero bias in rubric embedding models, as expected (left panel). (We omit marginalized and redacted embedding models, which are guaranteed to have negligible bias.)
The center and right-hand panels of Fig.~\ref{fig:audit_all} illustrate that rubric embeddings likewise admit the most gender-balanced and highest quality cohorts, across the full range of male advantage. This pattern remains true even when measured according to alternative measures of cohort quality; see Fig.~\ref{fig:audit_covariates}.

While generally more performant than the other bias mitigation techniques, marginalized models produce admissions policies that admit substantially fewer women and a much less academically prepared class than rubric embedding models.
This pattern is a consequence of the technique's forced bias correction. As shown in Fig.~\ref{fig:audit_study}, the black-box embedding models exhibit a slight bias against male candidates when \(b = 0\). Marginalizing out gender penalizes the group that would otherwise have an advantage and \emph{vice versa}. In this case, that means the model penalizes women, leading to fewer women among the top-ranked applicants. Importantly, though, the black-box model's bias against men does not appear to reflect actual bias in the ground-truth labels \(Y\). Indeed, the kitchen sink models exhibit no such bias, suggesting that the bias in the black-box models results from their particular representation. Consequently, while the marginalized models attempt to ``correct'' apparent bias in the black-box models, they ultimately unjustly penalize female applicants.

Why do rubric embeddings perform so well relative to other techniques? \citet{label-bias} show that unlike when predicting ground-truth labels, including additional covariates can reduce prediction accuracy in the presence of label bias. In particular, they show that the relative predictive accuracy of models using a reduced and full set of features is controlled by three terms: (1) how accurately models trained on the reduced feature set can predict the proxy label, (2) how accurately models trained on the full feature set can predict the proxy label, and (3) how correlated the ground truth label is with the predictions using the full feature set, conditional on the reduced features. Since the models trained on the full set of covariates in expectation will predict the proxy label weakly better than models trained on the reduced set, the operative term is the correlation. The DAG shown in the left-hand panel of Fig.~\ref{fig:rmse} approximately describes our admissions setting, where admissions officers attempt to score applicants based almost entirely on the information captured in our rubric embeddings. In this stylized model, because rubric embeddings \(d\)-separate the black-box embeddings and the ground truth, conditional on the rubric embeddings, any function of the black-box embeddings---\emph{a fortiori}, any predictions based on them---should be uncorrelated with the true labels. In fact, this is precisely what we observe: the center and right-hand panels of Fig.~\ref{fig:rmse} illustrate that while kitchen sink and black-box models achieve lower RMSE than the rubric embedding models relative to the \emph{proxy} label, relative to the ground truth, rubric embedding models achieve the highest accuracy. (See Fig.~\ref{fig:diff_in_mse} for the close agreement between theoretically predicted and empirical values of the MSE gap.)

\begin{figure}[t]
  \begin{center}
    \begin{subfigure}[c]{0.32\textwidth}
        \centering
        \fbox{%
        \begin{tikzpicture}[
            node distance=0.66cm and 0.42cm,
            every node/.style={
                draw, rounded corners,
                align=center, inner sep=3pt,
                font=\footnotesize,
                minimum height=0.45cm,
                minimum width=1cm
            },
            arr/.style={-Latex, semithick},
            dasharr/.style={dotted, semithick}
        ]
            \node (gender)      {Gender};
            \node (application) [right=of gender]      {Application};
            \node (latent)      [above=of application]  {Black-box};
        
            \node (biased) [below=of gender]      {Proxy label};
            \node (rubric) [below=of application] {Rubric};
            \node (true)   [below=of biased]      {True label};
        
            \draw[dasharr] (gender)      -- (application);
            \draw[arr]     (application) -- (latent);
            \draw[arr]     (application) -- (rubric);
            \draw[arr]     (gender)      -- (biased);
            \draw[arr]     (rubric)      -- (true);
            \draw[arr]     (true)        -- (biased);
        \end{tikzpicture}
        }
        \vspace{0.7cm}
    \end{subfigure}
    \begin{subfigure}[c]{0.67\textwidth}
        \includegraphics{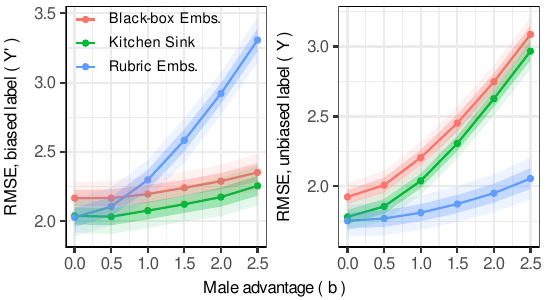}
    \end{subfigure}
  \end{center}
  \caption{\emph{%
    Theoretical explanation and empirical verification of rubric embedding models' superior predictive performance. \emph{Left:} A causal DAG representing our admissions setting. The dotted line indicates correlation. \emph{Center:} RMSEs of rubric embedding, black-box embedding, and kitchen sink models, evaluated relative to proxy labels \(Y'\). \emph{Right:} RMSEs of rubric embedding, black-box embedding, and kitchen sink models, evaluated relative to ground truth \(Y\).
  }}
\label{fig:rmse}
\end{figure}

\section{Discussion}
\label{sec:discussion}
Label bias is a long-standing and pernicious problem in designing equitable algorithms. But despite its importance, there are few general techniques available to address it---short of collecting less biased labels~\cite{obermeyer2019dissecting}. 
Rubric embeddings provide a practical approach to addressing this challenge.
Our analytical and empirical results demonstrate that rubric embeddings can yield more accurate and equitable decisions.
Our results further suggest that bias arising from proxy labels is not only a property of the labels themselves, but also of the representations used to model them.
There are, however, important limitations of our approach, three of which we discuss below.

First, constructing a rubric requires substantial time and domain expertise. The ideal rubric includes all legitimate decision-relevant factors and excludes irrelevant ones. In our admissions setting, we developed the rubric through iterative engagement with admissions officers, including reviewing evaluation guidelines, observing evaluation processes, and participating in admissions committee discussions. We then used LLMs to operationalize these criteria into a structured representation that could be consistently applied to applications, followed by manual review to resolve discrepancies between human and algorithmic assessments. While this process is difficult---and can only partially be automated---we believe it is critical for developing a high-quality rubric. An additional benefit is that it can prompt substantive discussion among decision-makers about which factors should be considered in the first place.

Second, basing predictions on rubric embeddings mitigates some forms of bias but not all. By construction, rubric embeddings enumerate factors that are considered legitimate for decision-making, and exclude group membership. As a result, they offer strong protection against \emph{disparate treatment}, the form of discrimination typically studied in correspondence experiments. However, even when the rubric captures the appropriate set of decision-relevant factors, label bias can distort the weights assigned to those factors. For example, if female applicants are more likely to engage in public service, then labels biased against women may lead to an inappropriately low weight on public service, thereby penalizing qualified women. This form of \emph{disparate impact} is more difficult to eliminate. In practice, rubric embeddings mitigate this risk by constraining the space of admissible models to those based on semantically meaningful factors. Consistent with this intuition, Fig.~\ref{fig:audit_all} shows that the gender composition of selected applicants remains stable even under substantial label bias.

Third, by design, rubric embeddings exclude group membership and related features from the representation. While this provides protection against disparate treatment, it also limits the ability of models to incorporate group-specific information, even in settings where such information may be relevant or normatively justified. For example, in some medical contexts, group membership can be predictive of risk due to underlying structural or biological differences, and incorporating such information may improve outcomes~\citep{coots2023reevaluating}. Similarly, in policy settings such as affirmative action, decision-makers may wish to explicitly account for group membership to address historical inequities. By restricting attention to group-neutral representations, rubric embeddings may preclude these types of interventions. This limitation reflects a fundamental tension between enforcing group-blind decision rules and allowing for context-dependent uses of group information.

Despite these limitations, rubric embeddings offer a practical approach for mitigating label bias in a variety of high-stakes settings, from university admissions to hiring. More broadly, our results suggest that addressing bias requires not only improving data and learning algorithms, but also reconsidering how inputs are represented. By grounding predictions in semantically meaningful, domain-specific criteria, rubric embeddings provide a principled way to align learned models with the underlying construct of interest, offering a promising direction for designing more reliable and equitable decision-making systems.

\newpage
\bibliography{ref.bib}

\newpage
\appendix

\setcounter{figure}{0}
\renewcommand{\thefigure}{A\arabic{figure}}
\setcounter{thm}{0}
\renewcommand{\thethm}{A\arabic{thm}}
\setcounter{prompt}{0}
\renewcommand{\theprompt}{A\arabic{prompt}}

\section{Mathematical Appendix}
\label{app:math}

\paragraph{Conventions and non-degeneracy assumptions.}

For an integrable random variable \(A\) and random object \(W\), we write \(\B E[A \mid W = w]\) for a chosen measurable version of the conditional expectation as a function of \(w\); by \(\B E[A \mid Z = \zeta(w)]\) we mean the composition of the maps \(\B E[A \mid Z = z]\) and \(\zeta(w) : \C W \to \C Z\).

To avoid trivial degeneracies arising from the choice of version, we make the following simple regularity assumptions. First, the probability of belonging to either gender is positive, i.e.,
\begin{equation}
\label{eq:assump_G}
    0 < \Pr(G = m) < 1.
\end{equation}
Second, we assume that the distributions of \(T(X, m)\) and \(T(X, f)\) are absolutely continuous with respect to the distribution of \(X\), i.e.,
\begin{equation}
    \text{if} \quad \Pr(X \in E) = 0 \qquad \text{then} \quad \Pr(T(X, g) \in E) = 0, \quad g \in \{m, f\}.
\end{equation}
All functions are assumed to be measurable and all random variables integrable. All equalities between random variables should be understood as almost sure equality.

\paragraph{Proofs.}

We begin with the proof of Prop.~\ref{prop:bias}.

\begin{proof}[Proof of Prop.~\ref{prop:bias}]
    By linearity and the tower property
    \begin{align*}
        \nu(X)
            &= \B E[Y \mid r(X)] + \B E[B \mid r(X)] \\
            &= \B E[Y \mid r(X)] + \sum_{g \in \{m, f\}} \B E[B \mid r(X), G = g] \cdot \Pr(G = g \mid r(X)).
    \end{align*}
    Since \(B \indep X \mid G\), \(B \indep r(X) \mid G\), and so \(\B E[B \mid r(X), G = g] = \B E[B \mid G = g]\). Define \(\beta(g) \defeq \B E[B \mid G = g]\). Then
    \[
        \nu(X) = \mu(X) + \beta(f) + [\beta(m) - \beta(f)] \cdot \pi(X),
    \]
    where we have used the fact that \(\Pr(G = f \mid r(X)) = 1 - \pi(X)\).

    Substituting this expression into \(\B E[\nu(T(X, m)) - \nu(T(X, f))]\) yields
    \[
        \B E[\mu(T(X, m)) - \mu(T(X, f))] + [\beta(m) - \beta(f)] \cdot \B E[\pi(T(X, m)) - \pi(T(X, f))],
    \]
    which is Eq.~\eqref{eq:bias_decomp}.
\end{proof}

Proposition~\ref{prop:baseline_bias} follows as a straightforward consequence of the linearity of expectation.

\begin{proof}
    By the definition of \(h_R\), we have that \(\D {bias}_T(h_R)\) is
    \[
         \B E[\alpha_0 + r(T(X,m))^\top\alpha - \alpha_0 - r(T(X,f))^\top\alpha] = \B E[r(T(X, m)) - r(T(X, f))]^\top \alpha,
    \]
    where the equality follows from the linearity of expectation.
    The latter expression is \(s^\top \alpha\). Since \(\alpha = \D{Var}(R)^{-1}\D{Cov}(R, Y)\), it follows that
    \[
        \D{bias}_T(h_R) = s^\top \D{Var}(R)^{-1}\D{Cov}(R, Y),
    \]
    which is Eq.~\eqref{eq:baseline_bias}.
\end{proof}

We next state and prove a straightforward but important consequence of orthogonalization.

\begin{lem}
\label{lem:ortho_dem_par}
    If \(R \indep G\) and \(\Pr(d(R) = 1) > 0\), then \(\Pr(G = g \mid d(R) = 1) = \Pr(G = g)\).
\end{lem}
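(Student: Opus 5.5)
The plan is to reduce the statement to the observation that \(d(R)\) is itself a measurable function of \(R\), so independence transfers to it, and then apply the definition of conditional probability. Concretely, since \(d : \C R \to \{0,1\}\) is measurable (as all functions are assumed to be, per our conventions), the event \(\{d(R) = 1\}\) equals \(\{R \in d^{-1}(\{1\})\}\), which lies in the \(\sigma\)-algebra generated by \(R\). Because \(R \indep G\), every event in \(\sigma(R)\) is independent of every event in \(\sigma(G)\); in particular, \(d(R) \indep G\).

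The key steps, in order, are as follows. First, I will note that \(D \defeq d^{-1}(\{1\})\) is a measurable subset of \(\C R\). Second, I will invoke independence to write
\[
\Pr(G = g, \, d(R) = 1) = \Pr(G = g, \, R \in D) = \Pr(G = g)\,\Pr(R \in D) = \Pr(G = g)\,\Pr(d(R) = 1).
\]
Third, since \(\Pr(d(R) = 1) > 0\) by hypothesis, the conditional probability \(\Pr(G = g \mid d(R) = 1)\) is well defined in the elementary sense. Dividing both sides of the display by \(\Pr(d(R) = 1)\) then yields \(\Pr(G = g \mid d(R) = 1) = \Pr(G = g)\) for each \(g \in \{m, f\}\).

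There is essentially no serious obstacle. The only points requiring care are (i) the measurability of \(d\), which is supplied by the paper's blanket convention, and (ii) the positivity hypothesis, which ensures the conditioning event has positive mass so that no choice of version of the conditional probability is needed. If \(d\) were allowed to be randomized, I would instead require its auxiliary randomness to be independent of \((R, G)\) and run the same argument with \(\sigma(R, U)\). As stated, however, the deterministic case suffices.
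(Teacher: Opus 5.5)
Your proposal is correct and matches the paper's proof: both observe that \(d(R)\) is \(\sigma(R)\)-measurable and therefore independent of \(G\). You additionally write out the elementary conditional-probability computation, using \(\Pr(d(R) = 1) > 0\), which the paper leaves implicit.
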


\begin{proof}
    Since \(R \indep G\), it follows that any \(\sigma(R)\)-measurable function is independent of \(G\), whence \(d(R) \indep G\), i.e., \(\Pr(G = g \mid d(R) = 1) = \Pr(G = g)\) for \(g \in \{m, f\}\).
\end{proof}

Both redaction and marginalization result in decision algorithms that are unbiased. In the case of marginalization, this depends on an intuitive consistency condition for the manipulation \(T\).
Note that manipulations \(T(x, g)\) for different \(g\) typically mirror one another, altering names, pronouns, or other elements of \(x\) in parallel.
Consequently, the result of applying a manipulation \(T\) multiple times is typically the same as the result of the final application:
\begin{equation}
\label{eq:consistency}
    T(T(X, g_0), g_1) \overset {d} {=} T(X, g_1), \qquad g_0, g_1 \in \{m, f\}.
\end{equation}
Distributional equality accounts for stochasticity in the manipulation.

\begin{prop}
\label{prop:no_bias}
    With the same notation as in Prop.~\ref{prop:bias}, let \(h(x) : \tilde {\C X} \to \B R\) be arbitrary.
    \begin{itemize}
        \item \textbf{Redaction}: If \(\rho\) and \(T\) satisfy Eq.~\eqref{eq:redaction}, then \(\D {bias}_T(h \circ \rho) = 0\).
        \item \textbf{Marginalization}: If \(T\) satisfies Eq.~\eqref{eq:consistency}, then \(\D {bias}_T(h_T) = 0\).
    \end{itemize}
\end{prop}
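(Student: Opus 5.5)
The plan is to prove each bullet directly from Definition~\ref{defn:bias}, using only linearity of expectation and equalities in distribution. No integrability beyond the standing conventions is needed. The one subtlety is that the stated identities hold only almost surely or only in distribution, so I must check that they can be pushed through \(h\) and through expectation. The absolute-continuity assumption handles this.

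\textbf{Redaction.} By Eq.~\eqref{eq:redaction}, \(\rho(T(X, m)) = \rho(X) = \rho(T(X, f))\) almost surely. Composing with the measurable \(h\) preserves almost sure equality, so \(h(\rho(T(X,m))) = h(\rho(T(X,f)))\) almost surely. Hence
\[
\D{bias}_T(h \circ \rho) = \B E[h(\rho(T(X,m))) - h(\rho(T(X,f)))] = 0.
\]
If Eq.~\eqref{eq:redaction} is read only as equality in distribution, the argument still works, since it gives \(\B E[h(\rho(T(X,g)))] = \B E[h(\rho(X))]\) for both \(g\). The difference of the two expectations is then zero.

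\textbf{Marginalization.} I will expand \(h_T(T(X, g))\) using Eq.~\eqref{eq:marginal}:
\[
h_T(T(X,g)) = w \, h(T(T(X,g),m)) + (1-w)\, h(T(T(X,g),f)).
\]
Taking expectations, Eq.~\eqref{eq:consistency} gives \(T(T(X,g),g_1) \overset{d}{=} T(X,g_1)\), so \(\B E[h(T(T(X,g),g_1))] = \B E[h(T(X,g_1))]\) for each \(g_1\). It follows that
\[
\B E[h_T(T(X,g))] = w\,\B E[h(T(X,m))] + (1-w)\,\B E[h(T(X,f))],
\]
and the right-hand side does not depend on \(g\). Therefore \(\D{bias}_T(h_T) = \B E[h_T(T(X,m))] - \B E[h_T(T(X,f))] = 0\). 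Linearity requires that each term be integrable, which holds by the standing convention that all random variables are integrable.

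\textbf{Main obstacle.} The only delicate point is measure-theoretic: \(h\) is a fixed version defined on \(\tilde{\C X}\), and we evaluate it at transformed inputs. Any modification of \(h\) or \(\rho\) on an \(X\)-null set must therefore not change the expectations. The absolute-continuity assumption guarantees this, since \(T(X,g)\) and iterates such as \(T(T(X,g),g_1)\) cannot charge \(X\)-null sets. I will note this explicitly; everything else is routine.
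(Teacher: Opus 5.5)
Your proof is correct and follows the paper's argument essentially line for line. For redaction you use the almost-sure identity \(\rho(T(X,g)) = \rho(X)\) to make the integrand vanish, and for marginalization you expand \(h_T\), apply linearity, and use Eq.~\eqref{eq:consistency} to show that \(\B E[h_T(T(X,g))]\) does not depend on \(g\). Your closing remark about absolute continuity is harmless but unnecessary. Here \(h\) and \(\rho\) are fixed functions rather than versions of conditional expectations. The hypotheses are also already almost-sure or distributional identities about exactly the random variables being integrated.
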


\begin{proof}
    First, we consider \emph{redaction}. Observe that
    \[
        \D {bias}_T(h \circ \rho) = \B E[h(\rho(T(X, m))) - h(\rho(T(X, f)))] = \B E[h(\rho(X)) - h(\rho(X))] = 0
    \]
    where the final equality follows from Eq.~\eqref{eq:redaction}.

    Next, we consider \emph{marginalization}. Substituting Eq.~\eqref{eq:marginal} into Def.~\ref{defn:bias} gives that
    \[
        \D {bias}_T(h_T) = \B E[h_T(T(X, m)) - h_T(T(X, f))].
    \]
    Expanding using Eq.~\eqref{eq:marginal}, this becomes
    \begin{multline*}
        \B E \big[ w \cdot h(T(T(X, m), m)) + (1 - w) \cdot h(T(T(X, m), f)) \\
            - w \cdot h(T(T(X, f), m)) - (1 - w) \cdot h(T(T(X, f), f)) \big],
    \end{multline*}
    which, by the linearity of expectation, equals
    \begin{multline*}
        w \cdot \B E[h(T(T(X, m), m))] + (1 - w) \cdot \B E[h(T(T(X, m), f))] \\
            - w \cdot \B E[h(T(T(X, f), m))] - (1 - w) \cdot \B E[h(T(T(X, f), f))].
    \end{multline*}
    Recalling Eq.~\eqref{eq:consistency}, this in turn equals
    \[
        w \cdot \B E[h(T(X, m))] + (1 - w) \cdot \B E[h(T(X, f))] - w \cdot \B E[h(T(X, m))] - (1 - w) \cdot \B E[h(T(X, f))],
    \]
    which is zero.
\end{proof}

\clearpage

\section{Audit study}
\label{app:audit}

To create the materials necessary to conduct our audit study, we iteratively process each type of application material (essays, letters of recommendation, and resumes) using an LLM-based pipeline. We use OpenAI's \texttt{gpt-5-mini}~\citep{openai2025gpt5} for template construction and repair, and \texttt{gpt-5.4-mini}~\citep{openai2026gpt54thinking} for validation. (We do not create audit study materials for an applicant's transcripts, as the standardized text representations we create contain neither explicit references to gender nor applicants' names.)

\paragraph{Template construction.}

Our pipeline consists of three steps. First, we provide the language model with an application material's extracted markdown text, prompting it to return the otherwise unaltered full text with wrappers around all signals of gender and indications of an applicant's name in standardized templates formatted with ASCII control characters. For gender signals, these templates include placeholders with the original text, as well as the appropriate male, female, and gender-neutral alternatives. For example, the literal string \texttt{her} is transformed into \texttt{\textbackslash{}x02her\textbackslash{}x1Ftheir\textbackslash{}x1Fhis\textbackslash{}x1Fher\textbackslash{}x03}. For names, the model inserts a distinguished name placeholder, the field's original text, and the type of name indicator (e.g., email address, last name, social media handle). Under this name templating paradigm \texttt{@johndoe} becomes \texttt{@\textbackslash{}x02NAME\textbackslash{}x1Fhandle\textbackslash{}x1Fjohndoe\textbackslash{}x03}. See Prompt~\ref{prompt:template} for full details.

After a template has been generated, we use a different model to validate the correctness of the replacements. For the validation step, we prompt the model with the original prompt (Prompt~\ref{prompt:template}), the template to be validated, and an additional prompt instructing it to identify and similarly replace any uncaptured signals of gender and applicant name. If no omissions or errors are found, the validator returns an empty list. See Prompt~\ref{prompt:validation} for full details. In addition to LLM-driven validation checks, we confirm locally that the gender and name markers are properly formatted, and that the ``original'' variant is unchanged relative to the generating document using Python's \texttt{difflib}. Templates passing both LLM and local checks are considered complete. 

Finally, we attempt to repair any errors identified during the validation step. We prompt the model with the generated template, a list of all the issues raised during the validation step, and instruct it to output a template with the listed errors resolved. The full instructions are given in Prompt~\ref{prompt:repair}. The output is then re-validated. When creating the materials for our audit study, we repeatedly validate and repair documents until fewer than \(0.005\%\) of the files remained uncompleted, which we subsequently subjected to manual review. Our manual review consisted of either manual approval of model false positives or human resolution of outstanding issues.

\paragraph{Generating transformed materials.}

Once a valid template has been constructed, we instantiate four variants---male, female, neutral, and ``original''---by filling in the appropriate gender and name placeholders. For gender, the original variant preserves the applicant's submitted language, and the male, female, and gender-neutral variants use the corresponding LLM-determined substitute. To appropriately swap in a new name for each applicant's male and female variants, we compiled the names and home countries of historical applicants.
In the male and female variants of applicants' materials, they are assigned random names belonging to an applicant of the appropriate gender from the applicants' home countries. In the neutral material variants, we substitute ``FIRSTNAME LASTNAME,'' as appropriate, for name placeholders in the template and in the original, their true name. (Original variants are identical to the generating document and are created primarily to validate the template generation process.)

\paragraph{Generating transformed features.}

For each of the four document variants we generate for an applicant, we embed all the materials using OpenAI's \texttt{text-embedding-3-large}~\citep{openai2023textembedding}, yielding a collection of black-box embeddings for each variant. Similarly, we run our structured rubric extraction pipeline on each generated variant, producing rubric embedding features for all applicants across all conditions considered in our analysis. 

This process yields the transformations \(T(x, g)\) for \(g \in \{m, f\}\) required for our audit study, as well as the gender-neutral materials required to evaluate the redaction technique described in Section~\ref{sec:ortho}.

\clearpage
\begin{tcolorbox}[promptstyle, title=Audit Study Templating Prompt]
% (lstinputlisting) prompts/templating.txt
\begin{lstlisting}
You are assisting in a correspondence (audit) experiment on admissions
to a public policy master's program. Reproduce the input document
character-for-character, with each gendered signal and each piece of
the applicant's identifying information wrapped in an inline marker
so downstream code can render gender-swapped variants.

OUTPUT: only the templated document. No preamble, no commentary, no
JSON wrapper, no code fences. Begin your output with the very first
character of the input document (or with a marker, if the document
begins with a name).

MARKER FORMAT

Markers are delimited by ASCII control characters:

  STX = \x02 (Start of Text)
  ETX = \x03 (End of Text)
  US  = \x1F (Unit Separator)

Gender marker --- exactly four fields, separated by US:

  \x02original\x1Fneutral\x1Fmale\x1Ffemale\x03

The four fields are: the source span verbatim, the gender-neutral
variant, the male variant, and the female variant. If a particular
variant truly has no reasonable analogue (e.g. "her pregnancy" has
no clean male variant), use the literal token  null  in that field
(no quotes). Use null sparingly --- almost everything has an analogue.

Name marker --- exactly three fields, the first being the literal
string "NAME":

  \x02NAME\x1Fpart\x1Foriginal\x03

`part` MUST be EXACTLY one of:
  first  last  nickname  other  email  handle  url

The third field is the exact name token from the source.

INTEGRITY RULE --- IMPORTANT

If you remove every marker and replace it with its `original` field
(field 1 of gender markers; field 3 of name markers), the result MUST
equal the input document byte-for-byte. Do NOT fix typos. Do NOT
normalize whitespace. Do NOT change anything outside a marker. The
input may contain markdown, HTML comments, page-break artifacts ---
preserve all of it.

WHAT TO WRAP IN A GENDER MARKER

Every occurrence of an explicit gender signal that refers to the
APPLICANT (not third parties):
  - Pronouns: he/she, him/her, his/her, hers, himself/herself.
    Use "themself"/"themselves" for the neutral reflexive.
  - Gendered titles: Mr., Ms., Mrs., Miss
  - Gendered nouns: actress, chairman, businesswoman
  - Family roles applied to the applicant: father, daughter
  - Explicit gender statements: "as a woman", "being male"
  - Gendered modifiers: women's varsity team
  - Membership in gender-specific groups (sororities/fraternities,
    Boy/Girl Scouts, single-sex schools). Use a reasonable
    opposite-gender analogue (Eagle Scout $\leftrightarrow$ Girl Scout Gold Award;
    Alpha Phi sorority $\leftrightarrow$ a plausible fraternity).

WHAT NOT TO WRAP IN A GENDER MARKER

  - Names, emails, URLs, handles --- wrap with NAME markers instead
  - Profession/interest correlates: nursing, engineering, ballet
  - Non-gendered honorifics: Dr., Prof., Rev.
  - Non-gendered pronouns: my, I, our
  - Gender signals referring to a third party (recommender, family,
    colleague). Leave them entirely outside any marker.

WHAT TO WRAP IN A NAME MARKER

Every occurrence of the APPLICANT'S name or an identifier that leaks
the name. Wrap each occurrence individually. When a third party
shares a name token (e.g. recommender shares the applicant's first
name), wrap ONLY the applicant's occurrences and leave the third
party's bare.

EXAMPLE

Input:
  "Jane Smith excelled. She wrote her thesis on social welfare."

Output:
  "\x02NAME\x1Ffirst\x1FJane\x03 \x02NAME\x1Flast\x1FSmith\x03 excelled. \x02She\x1FThey\x1FHe\x1FShe\x03 wrote \x02her\x1Ftheir\x1Fhis\x1Fher\x03 thesis on social welfare."
\end{lstlisting}
\end{tcolorbox}
\captionof{prompt}{\emph{The templating prompt, which instructs the model to wrap all applicant
  gender signals and name tokens in structured inline markers.}}
\label{prompt:template}

\begin{figure}
\vspace{2in}
\begin{tcolorbox}[promptstyle, unbreakable, title=Audit Study Validation Prompt]
% (lstinputlisting) prompts/validation.txt
\begin{lstlisting}
YOU ARE NOW REVIEWING the templated output that follows. The original
model produced this output by applying the rules above to a source
document. Your job is to find any APPLICANT gender signal that was
left outside a marker --- i.e., anywhere the original model failed to
follow rule 2 above.

OUTPUT a JSON object with a single field:


{"missed": ["...", "...", ...]}

Each entry should be a short string identifying the missed signal and
its surrounding context, e.g. "her in 'complete her thesis'".

Return an EMPTY list if nothing was missed.

DO NOT flag:

  - Gender signals referring to third parties (recommender, family
    members, named non-applicants, colleagues, etc.). These are
    correctly left outside markers.
  - Non-gendered honorifics (Dr., Prof., Rev.)
  - Profession/interest correlates (nursing, ballet, engineering, etc.)
  - Generic references that are not tied to the applicant

Be strict but not paranoid: only flag tokens you are confident the
applicant of this document possesses or is described by.
\end{lstlisting}
\end{tcolorbox}
\captionsetup{type=prompt, hypcap=false}
\caption{\emph{The validation prompt, which instructs the model to confirm whether the prior templating step missed any indicators of gender or applicant name.}}
\label{prompt:validation}
\end{figure}

\begin{figure}
\begin{tcolorbox}[promptstyle, unbreakable, title=Audit Study Repair Prompt]
% (lstinputlisting) prompts/repair.txt
\begin{lstlisting}
This template was flagged as potentially still containing incomplete
processing. The issues identified:

{issues\PYGZus{}block}

Some of these flags may be false positives --- use your judgment:

- If a flagged token correctly refers to a third party (recommender,
  family member, colleague, etc.), leave the existing template
  unchanged for that token.
- If a flagged token genuinely refers to the applicant and should
  have been wrapped, add the appropriate marker.
- For parse errors, fix the malformed marker(s) following the format
  rules in the system prompt.
- For integrity errors, restore the missing/modified text outside any
  marker so the marker-stripped reconstruction matches the source.

Output the full corrected template, no commentary.

PREVIOUS TEMPLATE:

{template}
\end{lstlisting}
\end{tcolorbox}
\captionsetup{type=prompt, hypcap=false}
\caption{\emph{The repair prompt, which instructs the model to repair any of the issues raised in the validation step.}}
\label{prompt:repair}
\end{figure}
\clearpage
\begin{figure}[p]
    \begin{center}
        \includegraphics{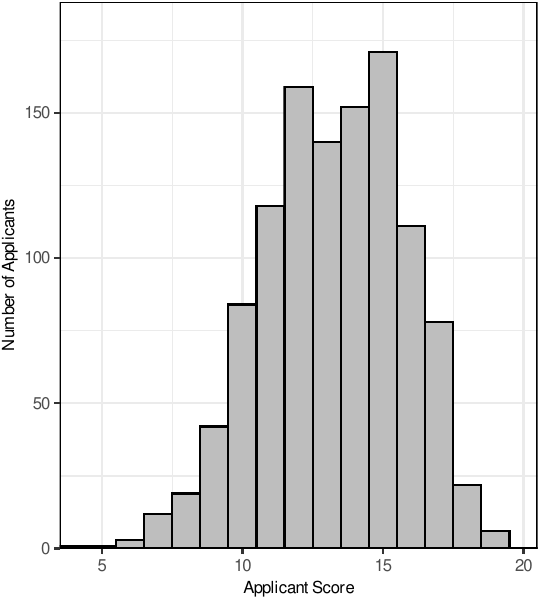}
    \end{center}
    \caption{\emph{%
        The distribution of actual applicant scores in the 2025--2026 round of admissions.
    }}
\label{fig:score_dist}
\end{figure}

\clearpage

\begin{figure}[p]
    \begin{center}
        \includegraphics{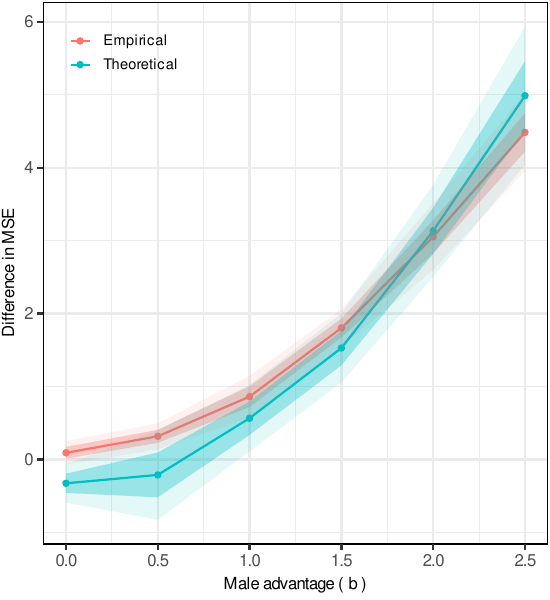}
    \end{center}
    \caption{\emph{%
        Theoretical predicted and empirical values of the gap in MSE between a kitchen sink and rubric embeddings model. 
    }}
\label{fig:diff_in_mse}
\end{figure}

\clearpage

\begin{figure}[p]
  \begin{center}
    \includegraphics[]{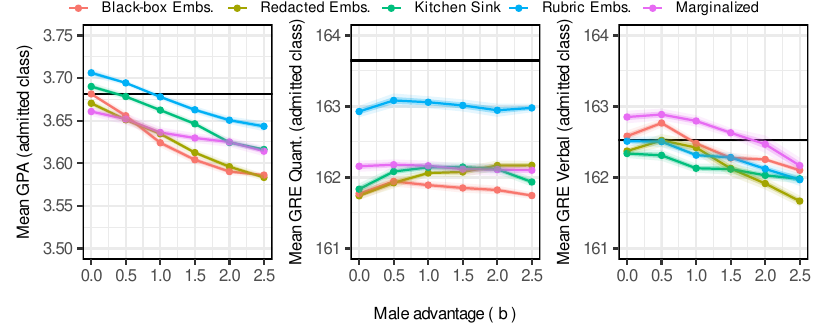}
  \end{center}
  \caption{\emph{%
    The impact on various measures of the strength of the admitted class using models of the proxy label with the corresponding level \(b\). Colors indicate the bias mitigation technique. Dark and light shaded regions show pointwise 68\% and 95\% confidence intervals, respectively. Solid lines show corresponding values for the top 20\% of students admitted according to actual scores \(Y\). Left panel: The average GPA (graduate and undergraduate) of the admitted class.
    Center panel: The average GRE quantitative score of the admitted class. Right panel: The average GRE verbal score of the admitted class.
  }}
\label{fig:audit_covariates}
\end{figure}

\end{document}